\documentclass[a4paper]{article}
\pdfoutput=1
\usepackage{dblfloatfix}
\usepackage{caption}  
\usepackage{graphicx}
\usepackage[numbers]{natbib}
\bibliographystyle{abbrvnat}

\usepackage{amssymb,amsmath,amsthm,amsfonts}

\usepackage{algorithm}
\usepackage{algorithmicx}
\usepackage{algpseudocode}

\usepackage{subfigure}
\usepackage{paralist}

\usepackage{listings}
\usepackage{url}

\newtheorem{theorem}{Theorem}
\newtheorem{corollary}{Corollary}
\newtheorem{definition}{Definition}

\newtheorem{example}{Example}

\algnewcommand{\algorithmicgoto}{\textbf{go to}}%
\algnewcommand{\GoTo}[1]{\algorithmicgoto~\ref{#1}}%

\newcommand{\keywords}[1]{\par\addvspace\baselineskip\noindent\textbf{Keywords: }\textit{#1}.}

\lstset{language=C++,basicstyle=\ttfamily}

\begin{document}
\title{Are there intelligent Turing machines?}

\author{N.~B\'atfai\thanks{
batfai.norbert@inf.unideb.hu, Department of Information Technology, University of Debrecen, H-4010 Debrecen PO Box 12, Hungary}}

\maketitle

\begin{abstract}
This paper introduces a new computing model based on the cooperation among Turing machines called orchestrated machines. Like universal Turing machines, orchestrated machines are also designed to simulate Turing machines but they can also modify the original operation of the included Turing machines to create a new layer of some kind of collective behavior. Using this new model we can define some interested notions related to cooperation ability of Turing machines such as the intelligence quotient or the emotional intelligence quotient for Turing machines.

\keywords{Modes of computation, Machine intelligence, Turing machines}
\end{abstract}

\section{Introduction}

Supposedly we all can see roughly the same things if we look at the same things. 
This subjective observation naturally suggests that there are similar analytical capabilities and processes in our brain. 
From the point of view of cognitivism, we have similar cognitive architecture and the mental programs that are running in our brain are the same.
But we certainly have no direct experience of other persons' mental processes.
This was also emphasized by Eugene Wigner in \cite{wigner_essay} where it was pointed out that 
``\textit{our knowledge of the consciousness of other men is derived only through analogy and some innate knowledge}''.
But what is being sensed by human consciousness? It is the common-sense. 
The elements of it have been detected by consciousness. 
The existence of common-sense can be seen as the fruit of cooperation of human minds. 
At present, it is a popular research field to build 
databases similar to human common-sense 
(see, for example the projects Open Mind Common Sense and ConceptNet\cite{conceptnet} 
or WordNet\cite{wordnet} and Cyc\cite{cyc}) 
in order to computer programs also will be able to use common-sense knowledge.
But while collaboration between computers is a fully well-known area due to it is based on such protocols that were developed by engineers, cooperation between human minds is an open and interdisciplinary research field
and there is even the possibility that spoken and written communication are merely apparent 
where speech and reading/writing processes may mask the real channel that is based on biological quantum teleportation\cite{bqt}. A similar kind of situation may be observed in the case of communication between computers where several virtual communication layer protocols are built on the real physical link\cite{tanenbaum}. 

If strong AI\cite{strongAI} is true, then in principle, there might be algorithms that can compute all conscious and unconscious decisions of our life, at least theoretically.
For a long time we had been believing that it cannot be possible because a computer program cannot be rich enough to describe such kind of complex behavior. 
But this assumption is completely wrong because it is tacitly based on investigating the source code of computer programs that have been directly designed and  written by human programmers. 
To provide an intuitive counterexample of how complex the behavior of a computer program may be, consider the 5-state Busy Beaver\cite{Rado} champion Turing machine discovered by Marxen and Buntrock\cite{MarxenBuntrock}. 
This machine can execute many millions of steps before halting. 
The operation of it represents such level of complexity that cannot be achieved by using human programmers directly. That is, these and similar machines have not been created directly by human programmers, they have been only discovered by mankind\cite{bb_arxiv}.

Can the physical, chemical, and biological processes behind consciousness be considered to be similar, for example, to the Busy Beaver programs with complex behavior? 
They definitely can be, in the sense that these natural processes show complex behavior and the aim of natural sciences is precisely to uncover such processes. Unfortunately, we cannot provide any new recipes for discovering and uncovering these mysteries. But in this paper we will try to find sets of the simplest computer programs (aka Turing machines) whose members can perform complex and meaningful communication and cooperation among each other. We suppose that such machines exist and we are going to try to uncover them.

Neumann wrote in his last unfinished book \cite{Neumann} that ``\textit{It is only proper to realize that language is a largely historical accident.}'' The same might also be true for consciousness. In our view the minds are complex programs that can communicate and cooperate with each other where the cooperation is so important that consciousness cannot exist in itself. For this reason, developing some isolated standalone consciousness cannot be successful. Therefore, in the spirit of Chaitin's quote ``\textit{To me, you understand something only if you can program it. (You, not someone else!) Otherwise you don’t really understand it, you only think you understand it.}''\cite{Chaitin} we would like to develop ``networks'' of cooperating computer programs, but we are not going to try to create these from scratch because it seems almost impossible at this moment. Instead, we are going to find existing computer programs that can cooperate with each other by a given orchestrated model.  

\subsection{Notations and background}

We apply the definition of Turing machines that was also used in the paper \cite{running_time_arxiv}. 
According to the notation applied in this cited paper, let the quadruple $G=(Q_G, 0, \{0,1\}, f_G)$ be a Turing machine (TM) with the partial transition function 
$f_G:Q_G\times\{0,1\} \rightarrow Q_G\times\{0,1\}\times\{\leftarrow,\uparrow,\rightarrow\}$, $0 \in Q_G \subset \mathbb{N}$. 

Throughout of this paper, let $M $ denotes the set of Turing machines with given n or fewer states and let  
$F \to T \in f_G$ denotes a transition rule of the machine $G \in M$, where $F \in Q_G\times\{0,1\}$ and $T \in Q_G\times\{0,1\}\times\{\leftarrow,\uparrow,\rightarrow\}$.

Particular machines will also be given in the form of rule-index notation shown in \cite{running_time_arxiv}. For example, the famous 5-state champion Turing machine of Marxen and Buntrock can be given as  
(9, 0, 11, 1,  15, 2, 17, 3,  11, 4,  23, 5,  24, 6,  3,  7, 21, 9,  0)
where the first number is the number of rules and the other ones denote the ``from'' and ``'to' parts of the rules.
This notational form can be used directly in our C++ programs to create Turing machines 
as it is shown in the next code snippet 
\begin{lstlisting}
TuringMachine<5> mb1 (9, 0,11,1,15,2,17,3,11,4,23,5,24,
      6,3,7,21,9,0);
\end{lstlisting}
The programs and their several running logs can be found on a Github repository at \url{https://github.com/nbatfai/orchmach}.

\section{Orchestrated cooperation among Turing ma\-chines}

In intuitive sense, 
we call Turing machines that have the ability to cooperate with each other by using some kind of algorithms like Alg. \ref{orchmach1}, Alg. \ref{orchmach2} or Alg. \ref{orchmach3} \textit{orchestrated machines} (OMs). The idea behind these algorithms is to modify the original operation of Turing machines in order to evolve some collective behavior. The name \textit{orchestrated machines} is partly inspired by the Penrose -- Hameroff Orchestrated Objective Reduction (OrchOR) model of quantum consciousness \cite{orchor}.
The flavor of our algorithms in question is reminiscent of the dynamics of Neumann's U and R processes \cite{neumannq} and of the dynamics of the intuitive cellular automata example of \cite{orchor} in sense that one transition rule is chosen non-deterministically from applicable rules (R) then this selected rule will be executed in all Turing machines (U), and so on.

The first orchestrated algorithm (OM1) is shown in pseudocode in Alg. \ref{orchmach1}. It is intended to be applied in computer simulations of cooperation among Turing machines. Accordingly, this algorithm uses Turing machines that have no input. 
The second algorithm is given in pseudocode in Alg. \ref{orchmach2}, it may be used for study of standard questions such as, ``What is the language recognized by an orchestrated machine?'' Finally Alg. \ref{orchmach3} gives higher autonomy to individual Turing machines in their operation. It may be noted that all three algorithms can be considered as a special universal Turing machine.

\subsection{Orchestrated machines}

The complete pseudo code for orchestrated machines is shown in Alg. \ref{orchmach1}. 
As input, the algorithm gets a set of Turing machines.
The initial state of the orchestrated machine is that the heads of the contained Turing machines are reading zero and the machines are in their initial state which is 0.
The operation of the algorithm is controlled by the variable $F \in Q_G\times\{0,1\}$ initialized in Line \ref{Finit}.
In the main loop in Line \ref{mainloop}, every machines $G \in M_n$ determine the transition rule $F \to T \in f_{G}$ that can be applicable to actual value of $F$. 
If a machine $G$ has no such rule, then it halts and will be removed from the orchestrated machine in Line \ref{removeG}.
The main loop collects the right side of applicable rules of the machines $G \in M_n$ into the set $U$ in Line \ref{collectintoU} (the implementation uses a list instead of a set). 
After the inner loop is exited, only one right side will be non-deterministically chosen to be executed on all machines where it is possible as shown from Line \ref{selectexecfrom} to \ref{selectexecto}.
The orchestrated machine halts if all contained Turing machines halt.
The most precise description of Alg. \ref{orchmach1} can be found in its first implementation in class \texttt{OrchMach1} in orchmach1.hpp at \url{https://github.com/nbatfai/orchmach}.

\begin{algorithm}[h!]
\small
\caption{Orchestrated machines (with no input)\label{orchmach1}}
\begin{algorithmic}[1]
\Require $M_0 \subseteq M$, \Comment{$M_0$ is the investigated subset of the machines}
	\Statex $n \in \mathbb{N}$, $N\in \mathbb{N} \cup \{\infty\}$\Comment{Local variables}
	\Statex $M_n\subseteq M_{n-1}\subseteq M$, 
	\Statex $G_n \in M$, $T_n \in Q_{G_n}\times\{0,1\}\times\{\leftarrow,\uparrow,\rightarrow\}$.
\Ensure $N$, 
	\Statex $(card(M_2),\dots, card(M_{N-1}))$,  \Comment{$card(M_i)$ denotes the cardinality of the set $M_i$. }
	\Statex $((G_2, T_2), \dots, (G_{N-1}, T_{N-1}))$,  \Comment{If $N$ is finite, then the sequences $\{card(M_i)\}$ and $\{(G_i, T_i)\}$ are also finite and otherwise all three of them are infinite.}
\Procedure{$\operatorname{OrchMach1}$}{$M_0$}
\State $n = 0$ \Comment{Counter of steps}
\State $F=(0, 0)$ \Comment{The initial state is 0 and the input word is the empty word that is the current symbol under the head is 0}\label{Finit}
\While{$M_n \ne \emptyset$}\label{mainloop}	
\State $U=\emptyset$, $M_{n+1} = M_n$
\For{$G \in M_n$} \Comment{For all machines in $M_n$}
  \If{$F \to T \in f_{G}$} \Comment{$f_{G}$ contains only one or no such rule because $G$ is deterministic}
    \State $U = U \cup \{(G, T)\}$\label{collectintoU}
  \Else \Comment{G halts}
    \State $M_{n+1} = M_{n+1}  \setminus {G}$\label{removeG}
  \EndIf
\EndFor
    \State $(G_n, T_n) = select(U)$ \Comment{One rule is non-deterministically chosen}\label{selectexecfrom}
    \State $F = exec(G_n, T_n)$ \Comment{to be executed on the same machine from where the rule was selected}
    \For{$G \in M_{n+1} \setminus {G_n}$}
    	\State $exec(G, T_n)$ \Comment{to be executed on the other machines where it is possible (because it can happen that $G$ do not contain the state that is appeared on the right side of the rule $F \to T_n$)}\label{selectexecto}
    \EndFor
    \State $n = n + 1$
\EndWhile  \Comment{All machines have halted}
    \State $N = n$ \Comment{$card(M_0)=card(M_1)$ and if $N$ is finite, then $card(M_N)=0$}
    \State $o_2 = card(M_2), \dots, o_{N-1} = card(M_{N-1})$ 
    \State $O_2 = (G_2, T_2), \dots, O_{N-1} = (G_{N-1}, T_{N-1})$  \Comment{The selected transition rules (and their machines)}
    \State \Return{$N$} \Comment{or ``returns'' $\infty$ if the while loop begins in line 4 never ends.}
\EndProcedure
  \end{algorithmic}
\end{algorithm}

\begin{theorem}[The tapes are the same]\label{tapes}
Let $M_n \subseteq M$ be the set of Turing machines used in (line 4 of) Alg. \ref{orchmach1}.
The contents of the tapes of Turing machines $G \in M_n$ are the same.
\end{theorem}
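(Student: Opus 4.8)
\section*{Proof proposal}

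The plan is to prove, by induction on the loop counter $n$, a configuration-level invariant that is slightly stronger than the statement: at the instant control reaches the main loop (line \ref{mainloop}) with counter value $n$, every machine $G \in M_n$ has (i) the same tape contents, (ii) the same head position, and (iii) the same current state, which coincides with the first component of $F$ while the symbol under the head coincides with the second component of $F$. The theorem is then exactly part (i). I would carry (ii) along because the move direction of the executed rule shifts the head, and identical tapes can only stay identical while the heads remain aligned; I would carry (iii) along because it certifies that all surviving machines are genuinely being asked to apply a rule for the \emph{same} $F$.

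For the base case $n=0$, the initialization sets $F=(0,0)$ in line \ref{Finit}, and, as described before the loop, every contained machine starts in state $0$ (which lies in every $Q_G$) with a blank all-zero tape and its head over a common origin cell. Hence (i)--(iii) hold trivially. First I would record this, since it fixes the convention that all heads share one coordinate system.

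For the inductive step I would assume (i)--(iii) at step $n$ and trace a single pass of the loop. The inner \textbf{for} loop splits $M_n$ into the machines that own the rule $F \to T$ and those that do not; the latter are deleted in line \ref{removeG}, which touches no tape, while the former constitute $M_{n+1}$. After $select(U)$ in line \ref{selectexecfrom} fixes one pair $(G_n,T_n)$ with, say, $T_n=(q',s',d)$, the algorithm applies this \emph{one and the same} $T_n$ to $G_n$ and, through the loop ending at line \ref{selectexecto}, to every other surviving machine. The only tape-affecting actions performed by $exec$ are writing the symbol $s'$ into the current cell and moving the head one step in direction $d$, and both are identical across all machines because $T_n$ is shared. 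By the induction hypothesis the tapes and head positions agreed beforehand, so after writing the same $s'$ at the same cell and shifting every head by the same $d$ they agree again, yielding (i) and (ii) at step $n+1$; since all survivors are sent to the common state $q'$ and then read the same freshly exposed symbol, the value returned by $exec(G_n,T_n)$ describes all of them at once, which is (iii).

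The hard part will be the caveat flagged in line \ref{selectexecto}: a surviving machine $G$ may lack the state $q'$ that appears on the right-hand side of the selected rule. I would argue that this does not disturb the invariant, because the tape-level part of $exec$ --- the write of $s'$ and the move in direction $d$ --- is always well defined and is carried out uniformly, so the tapes stay identical whether or not $q'\in Q_G$; the absence of $q'$ only means that at the next visit to line \ref{mainloop} the machine owns no rule for $F=(q',\cdot)$ and is therefore removed in line \ref{removeG}. Thus a machine may drop out of the cooperation one round later, but at every instant while it is still a member of some $M_n$ its tape matches that of every other member. Pinning down this reading of $exec$ precisely --- that writing and moving happen unconditionally and only the state bookkeeping may point to a missing state --- is the single place where I would lean on the reference implementation to confirm the intended semantics.
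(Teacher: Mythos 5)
Your proof is correct and follows essentially the same route as the paper's: induction on the step counter $n$, with the base case given by the all-zero initial tapes and the inductive step resting on the fact that lines 14 and 16 execute one and the same selected rule $T_n$ on every surviving machine. The only difference is that you make explicit the strengthened invariant (common head position and state) that the paper's two-sentence argument leaves implicit, which is a useful tightening but not a different approach.
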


\begin{proof}
\begin{inparaenum}
\item The statement holds trivially for the case $n = 0$ because tapes contain only zeros doe to the machines were started with no input.
\item  The tapes of the machines may be changed in line 14 and 16.
Assume that the statement holds for some $n$ then after the execution of $T_n$ the symbols under the heads of the machines will be the same.
\end{inparaenum}
\end{proof}

\begin{corollary}[One tape is enough]
It is important both theoretical but also from implementation aspects that computer simulation of orchestrated machines can be based on using only one common tape.
\end{corollary}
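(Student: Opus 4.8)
The plan is to derive the corollary directly from Theorem~\ref{tapes}. First I would observe that the inductive argument behind Theorem~\ref{tapes} in fact yields slightly more than equality of the tape \emph{contents}. All machines start with their head on the same cell, and in each pass of the main loop the single selected rule $F \to T_n$ is executed on every surviving machine (lines 14 and 16), so the write symbol and the head-movement direction carried by $T_n$ are applied uniformly across $M_n$. Hence the head positions of the machines $G \in M_n$ also stay synchronized at every step. I would record this as a one-line strengthening of the induction, because head synchronization is precisely what the single-tape simulation depends on: if two machines had identical tape contents but heads on different cells, the next write of $T_n$ would desynchronize them.

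Given this, the remainder is a bookkeeping argument. I would argue that the entire tape-and-head configuration is identical for all machines in $M_n$, so storing $card(M_n)$ separate copies is redundant; one common tape together with a single shared head pointer faithfully represents the tape state of every contained machine. The only tape-modifying operations in Alg.~\ref{orchmach1} are the $exec$ calls in lines 14 and 16, and on the common tape these collapse to a single action, namely writing the symbol of $T_n$ at the shared cell and moving the shared head in the direction of $T_n$. What genuinely differs between machines, and therefore must remain stored per machine, is only the internal control state (an element of $Q_G$) and the transition function $f_G$; none of this requires a private tape. The shared variable $F$, read once after each execution of $T_n$, is then already the correct input for every machine.

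The step I expect to be the main obstacle is the ``where it is possible'' clause in line 16. Executing $T_n$ on a machine $G$ whose state set $Q_G$ does not contain the target state appearing in $T_n$ must still perform the write and the move, since otherwise the tapes would desynchronize and both Theorem~\ref{tapes} and the single-tape reduction would fail. I would therefore make explicit that ``where it is possible'' refers only to whether $G$ can adopt the new control state, while the tape action of $T_n$ is always carried out. Under this reading the single shared tape stays consistent for every machine at every step, which is exactly the claim of the corollary.
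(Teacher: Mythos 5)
Your proposal is correct and follows essentially the same route as the paper, which states this corollary as an immediate consequence of Theorem~\ref{tapes} without further argument. Your explicit observation that the head positions (not just the tape contents) remain synchronized, and your reading of the ``where it is possible'' clause as constraining only the adoption of the new control state while the write and move are always performed, make precise two points the paper leaves implicit and that the single-tape simulation genuinely depends on.
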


\begin{corollary}[$F$ would be computed locally]
In line 16, if  $G$ contains the state that is appeared on the right side of the rule $F \to T_n$ then $F$ (computed in line 14) would be equal to $exec(G, T_n)$. 
\end{corollary}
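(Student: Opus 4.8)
The plan is to unfold the definition of the routine $exec$ and compare its effect on the two machines $G_n$ and $G$. Writing the right-hand side of the selected rule as $T_n = (q', s', d)$, with $q' \in Q_{G_n}$ the new state, $s' \in \{0,1\}$ the symbol to be written, and $d \in \{\leftarrow,\uparrow,\rightarrow\}$ the head movement, the call $exec(G_n, T_n)$ in line 14 performs three actions on $G_n$: it writes $s'$ at the current head position, moves the head according to $d$, and sets the machine's state to $q'$. The value it returns, namely the updated $F$, is therefore the pair $(q', \sigma)$, where $\sigma$ is the symbol now found under the head.

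First I would observe that the call $exec(G, T_n)$ in line 16 is handed exactly the same right-hand side $T_n$, so it writes the same symbol $s'$ and moves the head in the same direction $d$. Hence the \emph{write-and-move} part of the operation is identical on the two machines, and the only coordinate of the resulting $F$ that could conceivably differ is the symbol read after the move.

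Next I would invoke Theorem \ref{tapes}: before the execution of $T_n$ the tape contents of $G$ and $G_n$ coincide. Since every machine in the orchestrated run starts at the common initial position and subsequently executes the very same sequence of selected rules, their heads move in lockstep and hence sit over the same cell; indeed this synchrony of head positions is exactly what keeps the tapes identical in Theorem \ref{tapes}, as writes would otherwise land on different cells and desynchronise the tapes. Applying the identical write-and-move $T_n$ to two machines with identical tapes and head positions leaves them, again, with identical tapes and head positions, so the symbol $\sigma$ read by $G$ equals the symbol read by $G_n$.

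Finally, the state coordinate of the result is $q'$ for both machines; here the hypothesis that $G$ \emph{contains} the state appearing on the right-hand side of $F \to T_n$ is precisely what is needed, since it guarantees $q' \in Q_G$ so that the transition into $q'$ is legitimate in $G$ (this is exactly the caveat flagged in line 16). Combining the two coordinates yields $exec(G, T_n) = (q', \sigma) = exec(G_n, T_n) = F$, as claimed. I expect the only delicate point to be the justification that the post-move symbols agree: it rests not merely on the equality of tape contents supplied by Theorem \ref{tapes}, but on the accompanying invariant that the heads remain synchronised throughout the run, which I would state explicitly rather than leave implicit.
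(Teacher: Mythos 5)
Your argument is correct and is essentially the one the paper intends: the corollary is an immediate consequence of Theorem \ref{tapes} (identical tapes, plus the implicit lockstep of head positions since every surviving machine executes the same selected rule $T_n$ each step), so applying the same right-hand side yields the same pair $(q',\sigma)$ on $G$ as on $G_n$. The paper leaves this unproved as a direct corollary; your only addition is to make explicit the head-synchronisation invariant that the paper's proof of Theorem \ref{tapes} itself relies on tacitly, which is a reasonable and harmless elaboration.
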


There are special cases in which the behavior of Alg. \ref{orchmach1} is equivalent with a non-deterministic Turing machine (NDTM). It is shown in the next theorem.

\begin{theorem}[Relation to the NDTMs]\label{relndtm}
Let $H \subseteq M$, $F_i = \{F \vert F \to T \in f_{G_i}\}$, $G_i \in H$, $i=1, \dots, k$.
If $F_i = F_j$, $1 \le i, j \le k$ then there is an equivalent NDTM to the orchestrated machine $\operatorname{OrchMach1}(H)$.
\end{theorem}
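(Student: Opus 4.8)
The plan is to exhibit an explicit NDTM and then verify, by a lock-step induction on the step counter $n$, that its computation tree coincides with that of $\operatorname{OrchMach1}(H)$. Write $D = F_1 = \dots = F_k$ for the common domain supplied by the hypothesis, and define the machine $N = (\bigcup_{i=1}^{k} Q_{G_i}, 0, \{0,1\}, \delta)$ whose transition relation is the union of the graphs of the individual transition functions, $\delta = \bigcup_{i=1}^{k} f_{G_i}$. Since every $f_{G_i}$ has the same domain $D$, the relation $\delta$ is defined exactly on $D$, and for each $F \in D$ its admissible right-hand sides are $\delta(F) = \{T_i : F \to T_i \in f_{G_i},\ i = 1,\dots,k\}$, i.e. exactly the right-hand sides gathered into $U$ in Line \ref{collectintoU}. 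Several machines may contribute different right-hand sides for the same $F$, so $\delta$ is genuinely a relation rather than a function, and $N$ is the candidate NDTM.

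Before the induction I would record the two structural facts that the hypothesis buys. First, all contained machines remain in a common state and, by Theorem \ref{tapes}, share one tape and head position after every step, because $exec(G,T_n)$ writes the single selected $T_n$ into every machine. Second --- and this is the only place the assumption $F_i = F_j$ is used --- no machine is ever removed ahead of the others: the deletion in Line \ref{removeG} fires only for a machine with no rule for the current $F$, but when $F \in D$ every machine has one, and when $F \notin D$ every machine lacks one and they halt together. Hence the set of transitions available out of any $F \in D$ equals $\{T_1,\dots,T_k\}$ at every visit, matching the static relation $\delta(F)$ exactly.

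With these facts the induction is routine. The invariant is that after $n$ steps a run of $\operatorname{OrchMach1}(H)$ and a matching run of $N$ agree on the current state, the tape, the head position, and therefore on $F$; the base case holds because both begin in state $0$ on the all-zero tape with $F=(0,0)$. For the inductive step, if $F \notin D$ both halt, while if $F \in D$ each non-deterministic choice $select(U)=(G_{i^*},T_{i^*})$ corresponds to choosing the transition $T_{i^*}\in\delta(F)$ in $N$, and applying $T_{i^*}$ yields the same successor configuration on both sides. Reading the correspondence in both directions produces a bijection between computation paths preserving length and halting, which is the intended equivalence.

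The step I expect to demand the most care is the remark attached to Line \ref{selectexecto}, that a chosen right-hand side may name a state $q'$ absent from some $Q_{G_j}$. I would show the hypothesis renders this harmless: if the successor $F' = (q',s'')$ lies in $D = F_j \subseteq Q_{G_j}\times\{0,1\}$ then necessarily $q'\in Q_{G_j}$, whereas if $F'\notin D$ every machine halts at the next step anyway. Thus the ``foreign state'' case can never desynchronize the machines under the equal-domain condition, so the lock-step invariant survives; fixing the precise semantics of $exec$ in this degenerate situation and confirming that it cannot break the halt-together property is the genuinely delicate point, and everything else reduces to bookkeeping already licensed by Theorem \ref{tapes}.
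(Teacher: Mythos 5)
Your proposal is correct and follows essentially the same route as the paper: the equivalent NDTM is taken to be the union $\bigcup_{i} f_{G_i}$ of the individual transition functions, and the hypothesis $F_i = F_j$ is used exactly where the paper uses it, namely to guarantee that no machine is removed before the others, so that either all machines halt together at some finite $N$ or none ever halts. You merely spell out the lock-step induction, the appeal to Theorem \ref{tapes}, and the ``foreign state'' case that the paper's two-line proof leaves implicit.
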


\begin{proof}
There are two possible way to run the machine $\operatorname{OrchMach1}$(H)
\begin{inparaenum}
\item there is $N \in \mathbb{N}$ that for all $n < N$, $M_n = M_{n-1}$, $M_n = \emptyset$
\item for all $n \in \mathbb{N}$ holds that $M_n = M_{n-1}$.
\end{inparaenum}
We can construct an equivalent NDTM as follows: $f_{NDTM} = \cup{F_i} = F_j, j \in \{1, \dots, k\}$.
\end{proof}

\begin{corollary}[Deterministic decomposition of NDTMs]\label{decomp}
For every NDTM there exists an equivalent OM1. 
\end{corollary}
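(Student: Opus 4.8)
The plan is to prove Corollary~\ref{decomp} by an explicit construction that runs Theorem~\ref{relndtm} backwards: starting from an arbitrary NDTM, I would build a finite set $H$ of deterministic Turing machines whose orchestration $\operatorname{OrchMach1}(H)$ reproduces, step by step, the non-deterministic computation tree of the given machine. The synchronization guaranteed by Theorem~\ref{tapes} is what makes such a decomposition possible at all, since it lets me describe the whole orchestra by a single shared configuration.

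First I would fix notation. Let $N=(Q,0,\{0,1\},\delta)$ be an arbitrary NDTM, where $\delta$ is a transition relation, so each pair $(q,s)\in Q\times\{0,1\}$ admits a finite set of right-hand sides $\delta(q,s)=\{T^{(1)}_{q,s},\dots,T^{(r_{q,s})}_{q,s}\}$ with $r_{q,s}\ge 0$, the value $r_{q,s}=0$ marking a halting configuration. Let $m=\max_{(q,s)}r_{q,s}$ be the maximal branching. I would then create $m$ deterministic machines $G_1,\dots,G_m$ over the common state set $Q$, where $G_j$ always selects the $j$-th available alternative: set $f_{G_j}(q,s)=T^{(j)}_{q,s}$ whenever $j\le r_{q,s}$, and -- this is the crucial point -- fill every under-specified entry by a duplicate, $f_{G_j}(q,s)=T^{(1)}_{q,s}$ for $r_{q,s}<j\le m$ whenever $r_{q,s}\ge 1$, while leaving $(q,s)$ out of every domain when $r_{q,s}=0$. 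Finally put $H=\{G_1,\dots,G_m\}$.

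The verification would proceed by induction on the step counter $n$. By Theorem~\ref{tapes} all machines of $M_n$ share one tape, so the orchestra is described by a single configuration $(q,s,\text{tape})$, and I would show this configuration is reachable by $N$. For the inductive step, with current $F=(q,s)$ the set $U$ collects exactly the right-hand sides in the domains of the $G_j$, which by construction is precisely $\delta(q,s)$ (the duplicate copies collapse in $U$ and add no new alternative); hence $select(U)$ can realize any, and only, the moves available to $N$ at $(q,s)$, and after $T_n$ is executed on all machines they advance to the common new state and symbol. Conversely, every branch of $N$ is obtained by the matching sequence of $select$ choices. For halting, the filling rule makes domain membership all-or-nothing: $(q,s)$ lies in the domain of some $G_j$ iff it lies in the domain of all of them iff $r_{q,s}\ge 1$, so either every machine has an applicable rule or they all halt at once, and $M_n=\emptyset$ occurs exactly when $N$ reaches a halting configuration. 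In particular the hypothesis $F_i=F_j$ of Theorem~\ref{relndtm} is satisfied, so the construction is genuinely the converse of that theorem.

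The main obstacle I anticipate is precisely this domain alignment, and I would argue it carefully. If the under-specified entries were simply left undefined, the high-index machines $G_j$ would be removed from $M_n$ in line~\ref{removeG} at the first low-branching configuration, and would then be unavailable to contribute their alternative at a later high-branching configuration; the set $U$ would no longer equal $\delta(q,s)$ and a non-deterministic branch of $N$ would be silently lost. Duplicate-filling keeps all $m$ machines alive and synchronized throughout, which is exactly what makes $U$ faithful at every step while introducing no spurious transitions. The one point that still needs explicit checking is that the collapse of these duplicates inside $U$ neither creates nor destroys any alternative relative to $\delta(q,s)$.
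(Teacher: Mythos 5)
Your construction is correct, but it decomposes the NDTM differently from the paper. The paper's proof (Alg.~\ref{proofalg}) starts from the NDTM itself and repeatedly splits every machine in the current set along one non-deterministic left-hand side at a time, so the final breed has one deterministic machine per \emph{global choice function} --- a product $\prod_{(q,s)} r_{q,s}$ of machines --- each inheriting the full domain of $\delta$ automatically; it then simply observes that the hypothesis $F_i=F_j$ of Theorem~\ref{relndtm} holds and stops there. You instead build only $m=\max_{(q,s)} r_{q,s}$ ``index-selector'' machines and must do extra work (the duplicate-filling of under-specified entries) to force the domains to align, precisely the issue the paper's product construction sidesteps. In exchange you get a much smaller breed, and you supply the step-by-step induction showing that $U$ realizes exactly $\delta(q,s)$ at every configuration and that halting is all-or-nothing --- a verification the paper leaves implicit behind the appeal to Theorem~\ref{relndtm}. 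One small imprecision: since line~\ref{collectintoU} stores pairs $(G,T)$, the duplicates do not literally collapse in $U$; but as $select$ is non-deterministic and the set of executable right-hand sides is still exactly $\delta(q,s)$, this does not affect the argument. Both routes are valid; yours is more economical and more explicit, the paper's is shorter because domain alignment is free.
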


\begin{proof}
The proof is constructive. Let $(a,x)$ be the left side of a non-deterministic rule of a NDTM with right sides 
$(a,x) = \{(b_1, y_1, d_1), \dots, (b_n, y_n, d_n)\}$ and let $T$ be a NDTM.
Perform the algorithm shown in Alg. \ref{proofalg}. 
After this, the breed $H$ satisfies the conditions of Theorem \ref{relndtm}.
\end{proof}

\begin{algorithm}[h!]
\small
\caption{The algorithm of the proof of Corollary \ref{decomp}\label{proofalg}}
\begin{algorithmic}[1]
\State $H = \{T\}$
\If{there exist a non-deterministic rule $(a,x)\to$ in machines in $H$} \label{istherendrule}
      \State A ruleset $(a,x)\to$ is selected and let $H' = \emptyset$.       
	\For{$G \in H$} \Comment{For all machines in $H$}
		\For{$i=1$ to $n$} \Comment{For all right sides of $(a,x)$}
			\State Construct a new DTM $T_i$ such that $f_{T_i} = f_G \setminus \{(a,x)\to\}$ and 
$f_{T_i} =f_{T_i} \cup \{(a,x) \to (b_i, y_i, d_i)\}$
                   \State $H' = H' \cup \{T_i\}$
		\EndFor	
	\EndFor
	\State $H = H'$
    \State \GoTo{istherendrule}
\EndIf
\end{algorithmic}
\end{algorithm}

A machine $\operatorname{OrchMach1}(H)$ \textbf{halts} if there is a computation of Alg. \ref{orchmach1} such that $\operatorname{OrchMach1}(H)$ $< \infty$.

\begin{definition}[Turing machine breeds]
The set $H \subseteq M$ is referred to as a Turing machine breed (or simply a breed) 
if  $\operatorname{OrchMach1}(H)$ halts, that is if there exist a finite sequence $o_n(H)$. A breed $H$ is called non-trivial if $\overline{o_n(H)} \ge 2$, where the overline denotes the mean value. 
\end{definition}

\begin{definition}[Convergence and divergence of machine breeds]
A machine breed $H$ is divergent if for all $K \in \mathbb{N}$ there exist $O_k(H)$ such that 
$\operatorname{OrchMach1}(H) \ge K$. A machine breed is convergent if it is not divergent.
\end{definition}

\begin{example}[A divergent breed]
Let 
$A=( \{0\}, 0, \{0,1\},  \{(0,0) \to (0, 1, \rightarrow)\}) \text{ and}$ and  
$B=( \{0\}, 0, \{0,1\},  \{(0,0) \to (0, 0, \leftarrow)\})$
be two Turing machines then it is easy to see that the breed $\{A, B\}$ is divergent. 
For example, let $k=K$ and the following transition rules have been chosen:
       $O_2=(A, (0,0) \to (0, 1, \rightarrow))$, $\dots$, $O_{k-1}=(A, (0,0) \to (0, 1, \rightarrow))$,
       $O_k=(B, (0,0) \to (0, 0, \leftarrow))$.
\end{example}

\begin{figure}[h]
\centering
\subfigure[An infinite loop $A$.]{\qquad\includegraphics[scale=.5]{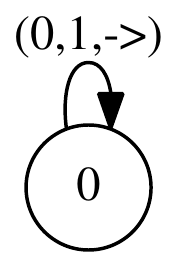}\qquad\label{figa}}
\qquad 
\subfigure[An other infinite loop $B$.]{\qquad\includegraphics[scale=.5]{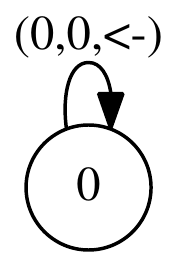}\qquad\label{figb}}
\qquad
\subfigure[The divergent breed consisting of two infinite loops $A$ and $B$.]{\qquad\includegraphics[scale=.5]{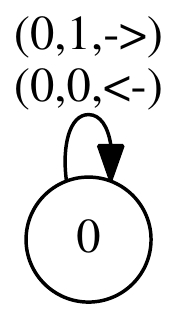}\qquad\label{figab}}
\caption{A divergent breed. The machines are given by their state transition diagrams.}
\label{figabab}
\end{figure}

\begin{example}[A convergent breed]\label{exapmle:cd}
Let 
$C=( \{0, 1, 2\}, 0, \{0,1\},  \{
(0,0) \to (1, 0, \rightarrow), 
(1,0) \to (2, 0, \rightarrow) 
\})$ and 
$D=( \{0, 1, 2\}, 0, \{0,1\},  \{
(0,0) \to (1, 1, \rightarrow), 
(1,0) \to (2, 1, \rightarrow) 
\})$ 
be two Turing machines shown in Fig. \ref{figcdcd}. It may be shown easily that the breed $\{C, D\}$ is convergent because it may be corresponded to the non-deterministic Turing machine shown in Fig. \ref{figcdcd} that always halts. 
\end{example}

\begin{figure}[h!]
\centering
\subfigure[The machine C.]{\qquad\includegraphics[scale=.5]{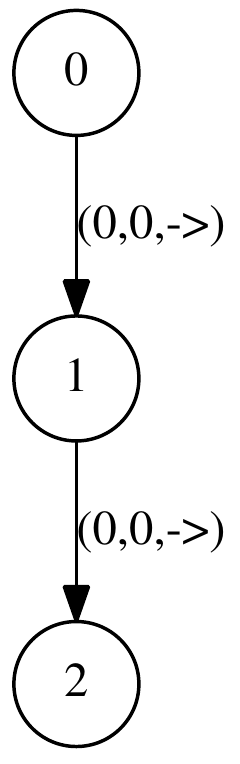}\label{figc}\qquad}
\qquad
\subfigure[The machine D.]{\qquad\includegraphics[scale=.5]{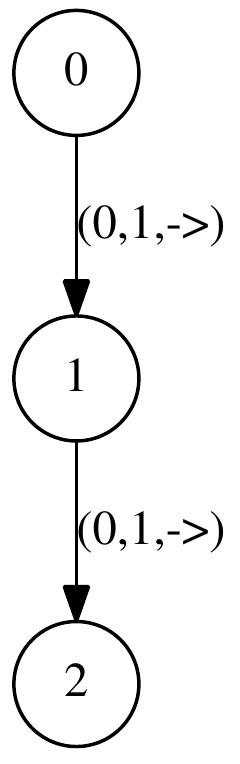}\label{figd}\qquad}
\qquad
\subfigure[The equivalent NDTM to the breed \{C, D\}.]{\qquad\includegraphics[scale=.5]{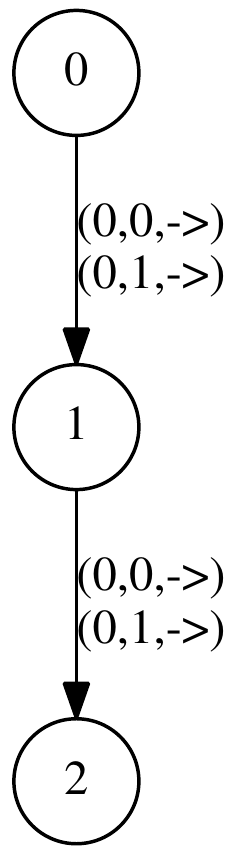}\label{figcd}\qquad}
\caption{A convergent breed.}
\label{figcdcd}
\end{figure}

\begin{example}[An infinite breed from finite machines]
The Turing machines
$E=( \{0\}, 0, \{0,1\},  \{(0,0) \to (0, 1, \uparrow)\})$ and 
$J=( \{0\}, 0, \{0,1\},  \{(0,1) \to (0, 0, \uparrow)\})$
shown in Fig. \ref{figefef} 
are not infinite loops but the breed $\{E, J\}$ is divergent. 
\end{example}

\begin{figure}[h!]
\centering
\subfigure[The machine $E$ is not an infinite loop.]{\qquad\includegraphics[scale=.5]{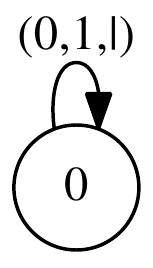}\label{fige}\qquad}
\qquad
\subfigure[The machine $J$ is also not an infinite loop.]{\qquad\includegraphics[scale=.5]{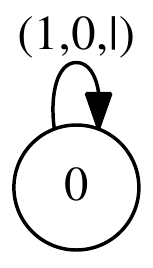}\label{figf}\qquad}
\qquad
\subfigure[The breed $\{E, J\}$ is infinite.]{\qquad\includegraphics[scale=.5]{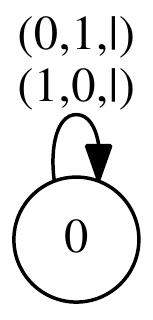}\label{figef}\qquad}
\caption{An infinite breed from finite machines.}
\label{figefef}
\end{figure}

\begin{theorem}[]
Let $H \subseteq M$ is a breed. If $H$ contains an infinite loop then it is divergent. 
\end{theorem}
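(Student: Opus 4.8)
The plan is to exploit that divergence is purely existential: to prove $H$ divergent it is enough to exhibit, for every $K$, one resolution of the non-deterministic $select$ in Alg.~\ref{orchmach1} whose computation survives at least $K$ steps. Let $L \in H$ be the machine that is an infinite loop. Since $L$ never halts on its own, my strategy is to drive the entire orchestrated machine along $L$'s solitary trajectory by always letting $select$ (Line~\ref{selectexecfrom}) return the rule contributed by $L$.

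First I would verify that this choice is always legal, i.e. that $L$ genuinely contributes a rule to $U$ at every step. Reading ``infinite loop'' as in the examples $A$ and $B$ --- a single machine whose standalone computation on the initially blank tape never halts --- is exactly the assertion that along its own trajectory $L$ always has an applicable rule for the current $F$, so $(L, T) \in U$ in Line~\ref{collectintoU} at every iteration. Next I would argue that, under the ``always choose $L$'' policy, the control pair $F$ and the common tape evolve step-for-step as they would if $L$ ran alone. This is where Theorem~\ref{tapes} does the work: all machines in $M_n$ share the same tape, and the symbol written and head move carried out in Lines~\ref{selectexecfrom}--\ref{selectexecto} are precisely those prescribed by $L$'s rule, so the shared tape is advanced exactly as $L$ would advance its own and the sequence of values of $F$ coincides with $L$'s standalone sequence of state/scanned-symbol pairs.

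Combining the two observations, $L$ always supplies a rule for the current $F$, so $L$ is never the machine deleted in Line~\ref{removeG}; hence $L \in M_n$ for all $n$, $M_n \ne \emptyset$, and the guard of the main loop (Line~\ref{mainloop}) is never falsified. This computation path is therefore infinite, so for every $K \in \mathbb{N}$ its prefix of length $K$ yields an output $O_k(H)$ with $\operatorname{OrchMach1}(H) \ge K$, which is the definition of a divergent breed.

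The step I expect to be the main obstacle is the middle one: making rigorous that executing $L$'s right side $T_n$ on the other surviving machines in Line~\ref{selectexecto} does not perturb the tape or head that $L$ sees on its next step, the worry being that a second application of the same right side could rewrite the cell or shift the head again. I would dispel this using Theorem~\ref{tapes} and its ``one tape is enough'' corollary: since every machine carries an identical tape and head, the transformation induced by $T_n$ is a single well-defined update, so $L$'s local view after one iteration is indistinguishable from that of $L$ running in isolation. Everything else --- that other machines may halt, be removed, or lack the target state --- is irrelevant, because the argument only ever needs $L$ to stay in $M_n$.
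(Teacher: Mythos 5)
Your proposal is correct and follows exactly the strategy of the paper's own (one-line) proof: always let $select$ return the rule contributed by the infinite-loop machine $L$, so that $F$ and the shared tape evolve as in $L$'s standalone non-halting run and $L$ is never removed, giving a computation of length $\ge K$ for every $K$. The additional care you take with Theorem~\ref{tapes} to confirm that executing $T_n$ on the other machines does not perturb $L$'s view is a reasonable filling-in of details the paper leaves implicit.
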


\begin{proof}
The proof is trivial, simply select  the transition rule of the infinite loop machine in every step.
\end{proof}

\begin{theorem}[Halting of orchestrated machines]
The language 
of convergent breeds 
is algorithmically undecidable.
\end{theorem}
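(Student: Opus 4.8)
The plan is to prove undecidability by reducing the blank-tape halting problem for ordinary Turing machines to the problem of recognizing convergent breeds, i.e.\ to deciding membership in the language $L=\{H\subseteq M : H \text{ is a convergent breed}\}$. Recall that the selection step $select(U)$ in Alg.~\ref{orchmach1} is finitely branching (each $U$ is finite), so by a K\"onig-type argument a breed $H$ is divergent exactly when the tree of computations of $\operatorname{OrchMach1}(H)$ contains an infinite branch; hence $H$ is a \emph{convergent} breed precisely when every computation path of $\operatorname{OrchMach1}(H)$ halts. Thus deciding $L$ amounts to deciding whether an orchestrated machine halts along all of its non-deterministic choices.

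First I would fix a Turing machine $T$ started on the all-zero tape, for which halting is undecidable, and build a breed $H_T$ that is \emph{always} a breed but is convergent iff $T$ halts. Using Corollary~\ref{decomp} it suffices to construct a suitable NDTM $N_T$ and pass to its equivalent OM1, since the decomposition in Alg.~\ref{proofalg} simulates $N_T$ step for step and therefore preserves the whole computation tree, in particular the presence or absence of infinite branches. Concretely, $N_T$ would offer, from its initial configuration, exactly two non-deterministic right-sides: one entering a fresh \emph{halt-now} state $h\notin Q_T$ that has no outgoing rule, and one entering an unmodified copy of $T$. The halt-now branch yields a finite computation, so $H_T$ is a breed regardless of the behavior of $T$, while the other branch is a faithful simulation of $T$ and halts iff $T$ halts.

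By construction every computation path of $\operatorname{OrchMach1}(H_T)$ halts iff $T$ halts, so $H_T\in L$ iff $T$ halts on the blank tape; this many-one reduction of the halting problem to $L$ gives undecidability of $L$. If one only wants undecidability of $L$ without insisting that the reduction instance always be a breed, the even simpler singleton reduction $T\mapsto\{T\}$ already works: by Theorem~\ref{tapes} and the deterministic dynamics, $\operatorname{OrchMach1}(\{T\})$ has a unique computation path that halts iff $T$ does, so $\{T\}$ is a convergent breed iff $T$ halts.

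The step I expect to be the main obstacle is guaranteeing that $H_T$ is \emph{always} a breed while confining all divergence to the simulation of $T$, within the shared-state and shared-tape semantics of Alg.~\ref{orchmach1}. Because every selected right-side $T_n$ is executed on all surviving machines and updates the common $F$, I must verify that choosing the halt-now branch drives every machine—including the copy of $T$—into a configuration with no applicable rule (so the whole OM1 halts), whereas choosing the simulation branch removes the auxiliary halt-now machine in Line~\ref{removeG} (it has no rule at the new shared state) without perturbing the state or tape seen by $T$; here Theorem~\ref{tapes} guarantees that the single shared tape is exactly the tape $T$ would use. Checking this non-interference, together with checking that the decomposition of Corollary~\ref{decomp} preserves infinite branches and not merely the recognized language, is where the real care is required.
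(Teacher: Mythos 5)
Your proposal is correct, and its closing observation is precisely the paper's entire proof: the paper disposes of the theorem in one sentence by noting that the sublanguage of \emph{trivial} breeds (singletons $\{T\}$) reduces convergence to the blank-tape halting problem, which is your remark that $\{T\}$ is a convergent breed iff $T$ halts. Your primary construction, however, goes further than the paper in a way that is genuinely worth having. The singleton reduction maps a non-halting $T$ to a set $\{T\}$ that is not a breed at all (no halting computation exists), so it only establishes undecidability if the instance set of the ``language of convergent breeds'' is taken to be all subsets of $M$; if one reads the theorem as asking for undecidability of convergence \emph{within the class of breeds}, the paper's one-liner does not literally apply. Your $H_T$ with the halt-now branch, built via the NDTM and Corollary~\ref{decomp}, repairs exactly this by making every reduction instance a breed while confining divergence to the simulation of $T$, at the cost of the non-interference checks you identify (the shared $F$, the removal in Line~\ref{removeG}, and the fact that Alg.~\ref{proofalg} must preserve infinite branches rather than merely the recognized language, which your K\"onig-type remark correctly reduces to preserving arbitrarily long finite ones). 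In short: the paper buys brevity by implicitly choosing the weaker reading; you buy robustness under both readings at the price of the extra verification you already flagged.
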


\begin{proof}
It is trivial because the sublanguage of trivial convergent breeds gives a well-known variant of the halting problem. (A breed $H$ is called trivial if $card(H) = 1$.)
\end{proof}

\begin{definition}[Purebred breeds]
A convergent machine breed $H$ is purebred
 if there is no real subset $M_1 \subset H$ such that  
$\{\operatorname{OrchMach1}(M_1,{'}T')\} = \{\operatorname{OrchMach1}(H, {'}T')\}$
where $'T'$ denotes a possible result of the computation $\operatorname{OrchMach1}$  
for precise details see from line 23 to line 31 in Alg. \ref{orchmach2ret}.
\end{definition}

\begin{algorithm}[H]
\small
\caption{Orchestrated machines (return values)\label{orchmach2ret}}
\begin{algorithmic}[1]
\Require $M_0 \subseteq M$, $m \in \{{'}N', {'}T', {'}O', {'}o'\}$\Comment{indicates what return value will be returned}
\Statex $\dots$
\Procedure{$\operatorname{OrchMach1}$}{$M_0$, m}
\Statex $\dots$
 \makeatletter\setcounter{ALG@line}{22}\makeatother
 \If{$m = {'}N'$} 
   \State return N
  \ElsIf {$m = {'}O'$}  
   \State return $O_2 = (G_2, T_2), \dots, O_{N-1} = (G_{N-1}, T_{N-1})$  
  \ElsIf {$m = {'}o'$}  
   \State return $o_2 = card(M_2), \dots, o_{N-1} = card(M_{N-1})$
  \ElsIf {$m = {'}T'$}  
   \State return the concatenation of tape symbols from the leftmost 1 to the rightmost 1 (or $\infty$ if N is equal to $\infty$)
 \EndIf
\EndProcedure
  \end{algorithmic}
\end{algorithm}

\begin{example}[A purebred breed and a not purebred one]
In Example \ref{exapmle:cd} the breed $\{C, D\}$ is purebred but if the machine $G=( \{0, 3\}, 0, \{0,1\},  \{(3,0) \to (3, 1, \rightarrow) \})$
is added to the breed $\{C, D\}$ then the result $\{C, D, G\}$ breed will be not purebred.
\end{example}

In the following, let $\mathcal{B}(M)$ denote the set of all purebred Turing machine breeds.

\begin{definition}[iq, eq]\label{iqeqdef}
Let $H \in \mathcal{B}(M)$ be a purebred breed, the quantity 
$\operatorname{iq}(H) = \max_{}\{\operatorname{OrchMach1}(H)\}$, $iq: \mathcal{B}(M) \to \mathbb{N}$
is called the \textit{intelligence quotient}  and similarly
the quantity 
$\operatorname{eq}(H) = \max_{}\{\left \lfloor \overline{o_n(H)} \right \rfloor\}$, $eq: \mathcal{B}(M) \to \mathbb{N}$
is called the \textit{emotional quotient} 
of the breed $H$.
\end{definition}

\begin{definition}[Intelligence functions]
Let $N, Z \in \mathbb{N}$ be na\-tu\-ral numbers.
The functions 
\begin{align*}
&eq: \mathbb{N} \to \mathbb{N}, \ eq(N) = 
\max_{H \in \mathcal{B}(M)}\{\left \lfloor \overline{o_n(H)} \right \rfloor \mid \operatorname{OrchMach1}(H) = N\} 
\\
&iq: \mathbb{N} \to \mathbb{N}, \ iq(Z) = 
\max_{H \in \mathcal{B}(M)}\{\operatorname{OrchMach1}(H)\mid \left \lfloor \overline{o_n(H)} \right \rfloor  = Z\}
\end{align*}
are called \emph{intelligence functions} of breeds, 
where $\lfloor \rfloor$ denotes the floor function, 
but a more precise definition can be given that uses Def. \ref{iqeqdef} as follows

\begin{align*}
&EQ: \mathbb{N} \to \mathbb{N}, \ EQ(N) = 
\max_{H \in \mathcal{B}(M)}\{\left \lfloor \overline{o_n(H)} \right \rfloor \mid \operatorname{iq}(H) = N\} 
\\
&IQ: \mathbb{N} \to \mathbb{N}, \ IQ(Z) = 
\max_{H \in \mathcal{B}(M)}\{\operatorname{OrchMach1}(H)\mid \operatorname{eq}(H) = Z\}
\end{align*}
\end{definition}

In intuitive sense, the function $EQ(N)$ gives the maximum number of machines that can do $N$ steps together in a purebred breed, and inversely,  $IQ(Z)$ gives the maximum number of steps that can be done by $Z$ machines together as members of a purebred breed.  It is to be noted that functions $EQ$ and $IQ$ are well defined total functions due to $M$ is a finite set.  

\begin{theorem}[]
Let $x,y \in \mathbb{N}$ be na\-tu\-ral numbers, 
$\operatorname{EQ}(x) \ge y$ $\Leftrightarrow$ $\operatorname{IQ}(y) \ge x$.
\end{theorem}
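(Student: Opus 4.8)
The plan is to unfold both inequalities into existence statements over purebred breeds and then to recognise the claimed equivalence as the standard staircase duality between two profile functions cut from one common two-dimensional point set. First I would fix the set of achievable pairs $S = \{(\operatorname{iq}(H), \operatorname{eq}(H)) : H \in \mathcal{B}(M)\} \subseteq \mathbb{N} \times \mathbb{N}$, which is finite because $M$ is finite. By Definition \ref{iqeqdef} and the definitions of $EQ$ and $IQ$ we have $EQ(x) = \max\{e : (x, e) \in S\}$ and $IQ(y) = \max\{i : (i, y) \in S\}$, so that $EQ(x) \ge y$ says exactly that some purebred breed $H$ satisfies $\operatorname{iq}(H) = x$ and $\operatorname{eq}(H) \ge y$, while $IQ(y) \ge x$ says that some purebred breed $H'$ satisfies $\operatorname{eq}(H') = y$ and $\operatorname{iq}(H') \ge x$.

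Next I would pass to the down-closure $R = \{(x, y) : \exists (i, e) \in S \text{ with } i \ge x \text{ and } e \ge y\}$. Since $R$ is downward closed in each coordinate, the two profile functions $\overline{EQ}(x) = \max\{y : (x, y) \in R\}$ and $\overline{IQ}(y) = \max\{x : (x, y) \in R\}$ satisfy $\overline{EQ}(x) \ge y \Leftrightarrow (x, y) \in R \Leftrightarrow \overline{IQ}(y) \ge x$ automatically, because $\overline{EQ}(x) \ge y$ is equivalent to the existence of some $y' \ge y$ with $(x, y') \in R$, which by down-closedness is just $(x, y) \in R$, and the latter membership is symmetric in the two roles. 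This is the duality I want, but stated for the $\ge$-constrained versions $\overline{EQ}$, $\overline{IQ}$ rather than for the $=$-constrained $EQ$, $IQ$.

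The theorem therefore reduces to the identities $EQ = \overline{EQ}$ and $IQ = \overline{IQ}$ on the relevant arguments, i.e. to showing that replacing ``$\operatorname{iq}(H) = x$'' by ``$\operatorname{iq}(H) \ge x$'' (and ``$\operatorname{eq}(H) = y$'' by ``$\operatorname{eq}(H) \ge y$'') does not change the maximum. This last step is where the main obstacle lies: given a breed that maximises $\operatorname{eq}$ subject to $\operatorname{iq}(H) \ge x$, I must produce a breed whose $\operatorname{iq}$ equals $x$ exactly while its $\operatorname{eq}$ is not decreased, and symmetrically for the other function.

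I would attempt this adjustment by a padding/truncation construction on breeds: adjoining or removing controllable auxiliary machines of the kind used in the non-purebred example (a machine active only on an otherwise unused state), or realising a prescribed non-deterministic profile through the deterministic decomposition of Corollary \ref{decomp}, so as to tune one quotient to its target value while keeping the other above its threshold. Establishing that such an adjustment always exists, and that it preserves the purebred property required for membership in $\mathcal{B}(M)$, is the delicate point; once it is in hand, the biconditional $\operatorname{EQ}(x) \ge y \Leftrightarrow \operatorname{IQ}(y) \ge x$ follows immediately from the staircase duality for $R$ established above.
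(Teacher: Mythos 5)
Your unfolding of the two inequalities is correct and considerably more careful than the paper's own justification, which consists of the single sentence that the claim ``simply follows from the structure of the definitions.'' You have correctly isolated the point on which everything turns: $\operatorname{EQ}(x) \ge y$ asserts the existence of a purebred breed with $\operatorname{iq}(H) = x$ (equality) and $\operatorname{eq}(H) \ge y$, while $\operatorname{IQ}(y) \ge x$ asserts the existence of a breed with $\operatorname{eq}(H') = y$ (equality) and $\operatorname{iq}(H') \ge x$, and these are \emph{not} formally equivalent for an arbitrary finite set of achievable pairs $S = \{(\operatorname{iq}(H), \operatorname{eq}(H)) : H \in \mathcal{B}(M)\}$. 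If, say, $S$ contained only the pair $(5,4)$, then $\operatorname{EQ}(5) \ge 3$ would hold while $\operatorname{IQ}(3)$ would be a maximum over an empty set, so the biconditional would fail. The staircase duality you establish for the down-closure $R$ is correct as far as it goes, but it applies to the $\ge$-constrained profiles $\overline{EQ}$, $\overline{IQ}$, not to $\operatorname{EQ}$ and $\operatorname{IQ}$ as actually defined.

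The genuine gap is therefore the final step, which you flag but do not carry out: you would need to show that for every achievable pair $(i,e) \in S$ and every $x \le i$, $y \le e$, there exist purebred breeds realizing $\operatorname{iq} = x$ with $\operatorname{eq} \ge y$, and $\operatorname{eq} = y$ with $\operatorname{iq} \ge x$. The padding/truncation idea is plausible in spirit, but nothing in the paper guarantees that adjoining or removing auxiliary machines can tune one quotient to an exact target value while simultaneously preserving the other quotient's lower bound and the purebred property; indeed, the paper's own example shows that adjoining an auxiliary machine can destroy purebredness. Without that realizability lemma the proposal does not close. To be fair, the paper's one-line proof does not close it either; the honest conclusion is that the theorem as stated needs either this unproven lemma or a reformulation of $\operatorname{EQ}$ and $\operatorname{IQ}$ with ``$\ge$'' in place of ``$=$'' in the side conditions, under which your down-closure argument would give the equivalence immediately.
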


\begin{proof}
It simply follows from the structure of the definitions of $\operatorname{EQ}$ and $\operatorname{IQ}$. 
\end{proof}

It is an open question whether or not there is an interesting relation between the functions $IQ$ and $EQ$.
At this point, we have just started to collect experiences with orchestrated machines.
We have developed a computer program to help automatically investigate Turing machine breeds.
The program can be found in the git repository at \url{https://github.com/nbatfai/orchmach}. It allows to gather experience with orchestrated machines. The results of some first experiments are presented in Table \ref{firstexperiment}, 
where the first column shows the cardinality of the examined breed. 
The second column is the maximum number of ones of a given breed's individual Turing machines. 
For example, the breed labelled by ``5a'' contains the Marxen and Buntrock's champion machine so its first ``1s'' column is $4097$. 
(We have used several well-known Busy Beaver TMs like Marxen and Buntrock's champion machines, Uhing's machines or Schult's machines \cite{MichelSurvey}. The exact lists of TMs of examined breeds and full running logs are available at \url{http://www.inf.unideb.hu/~nbatfai/orchmach} or in the sources at \url{https://github.com/nbatfai/orchmach}.)
The third column is the usual time complexity of the most powerful individual Turing machine contained in the previous column.
The other columns show running results that are organized in triplet groups of three, the first column of triplets is maximized for $o_2$, the second one is maximized for $OrchMach1(H, {'}N')$ and the last one is maximized for the number of ones.

For example, it is very interesting that with the exception of the first 21 time steps of the computation of the 75001 ones was done by Uhing's machine in 
$3.272948454 * 10^9$
time steps. (For further details, see related running log at the author's homepage  \url{http://www.inf.unideb.hu/~nbatfai/orchmach/data/breed_3a_sel.txt}.) 

It is likely a significant observation that there are breeds that are more powerful if their 
$o_2$ are greater than 1, for example, see the triplet $(6, 831, 59)$ in the row ``21'' of Table \ref{firstexperiment}. 
This situation is well presented by the plot labelled exp. 7/breed 10 (``21'') in Fig. \ref{rplots2}. 

Finally, it should be noted that 
the program suggests that it is possible that the listed breeds may be convergent.

\begin{table*}[h!]
\renewcommand{\arraystretch}{1.2}
\caption{The first computational results, as expected, suggest that a breed may be more powerful (in number of computed ones or in time complexity) than an individual Turing machine (see, for example 4096 $\to$ 75001, 32 $\to$ 9833 or 32 $\to$ 33161).
In addition, if $\overline{o_n}$ is increased to too large a value, then the powerful of the breeds is decreased (see, for example the triplet (3, 724, 118) $\to$ (12, 33, 19) in the row of ``13'').\label{firstexperiment}}
\footnotesize
\centering
\begin{tabular}{p{.6cm}|r|r||p{.5cm}|r|r||r|r|r||r|r|p{.6cm}}
&1s&$c_t$&max $\left \lfloor \overline{o_n} \right \rfloor $& N & 1s & $\left \lfloor \overline{o_n} \right \rfloor $ & max N & 1s & $\left \lfloor \overline{o_n} \right \rfloor $ & N & max 1s
\tabularnewline
\hline 
``7''&4097&7E7&
6& 31& 7&
1 &8.5E8 &14276&
1 &8.5E8 &14276
 \tabularnewline
\hline 
``3''&4097&7E7&
2& 33& 7&
1& 3.2E9& 74280&
1& 3.2E9& 74281
\tabularnewline
\hline 
``3a''&4096&2.3E7&
2& 41& 7&
1& 3.2E9& 75001&
1& 3.2E9& 75001
\tabularnewline
\hline 
``18''&4097&7E7&
16& 31& 7&
5& 1121& 65&
5& 856 &100
\tabularnewline
\hline 
``17''&4097&7E7&
15& 30& 4&
5& 1205& 68&
5& 686& 110
\tabularnewline
\hline 
``6''&4097&7E7&
5& 136& 11&
3& 1213 &151&
3& 1189& 196
\tabularnewline
\hline 
``5''&4096&2.3E7&
4& 135& 36&
3& 474& 91&
3& 390& 128
\tabularnewline
\hline 
``5a''&4097&1.1E7&
4& 80 &12&
1& 1.1E7 &4097&
1& 1.1E7& 4097
\tabularnewline
\hline 
``6a''&4096&1.1E7&
5&88& 22&
3& 694& 109&
3& 614 &141
\tabularnewline
\hline 
``17a''&501&1.3E5&
15& 33& 9&
5& 1175& 73&
5& 745& 107
\tabularnewline
\hline 
``13''&501&1.3E5&
12& 33& 19&
3& 1267& 35&
3& 724& 118
\tabularnewline
\hline 
``5b''&32&582&
4& 47&6&
1& 5.1E7& 9833&
1& 5.1E7 &9833
\tabularnewline
\hline 
``15''&32&582&
13& 21& 7&
1& 5.8E8& 33161&
1& 5.8E8& 33161
\tabularnewline
\hline 
``21''&32&582&
19& 32& 12&
6& 831& 59&
2& 273& 101
\tabularnewline
\hline 
``18a''&160&2E4&
17& 33& 8&
6& 940& 14&
2& 267& 99
\tabularnewline
\hline 
``17b''&160&2E4&
16& 30& 5&
6& 962& 26&
2& 229& 95
\tabularnewline

\end{tabular}
\end{table*}

\begin{figure}[!h]
\centering
\includegraphics[scale=.4]{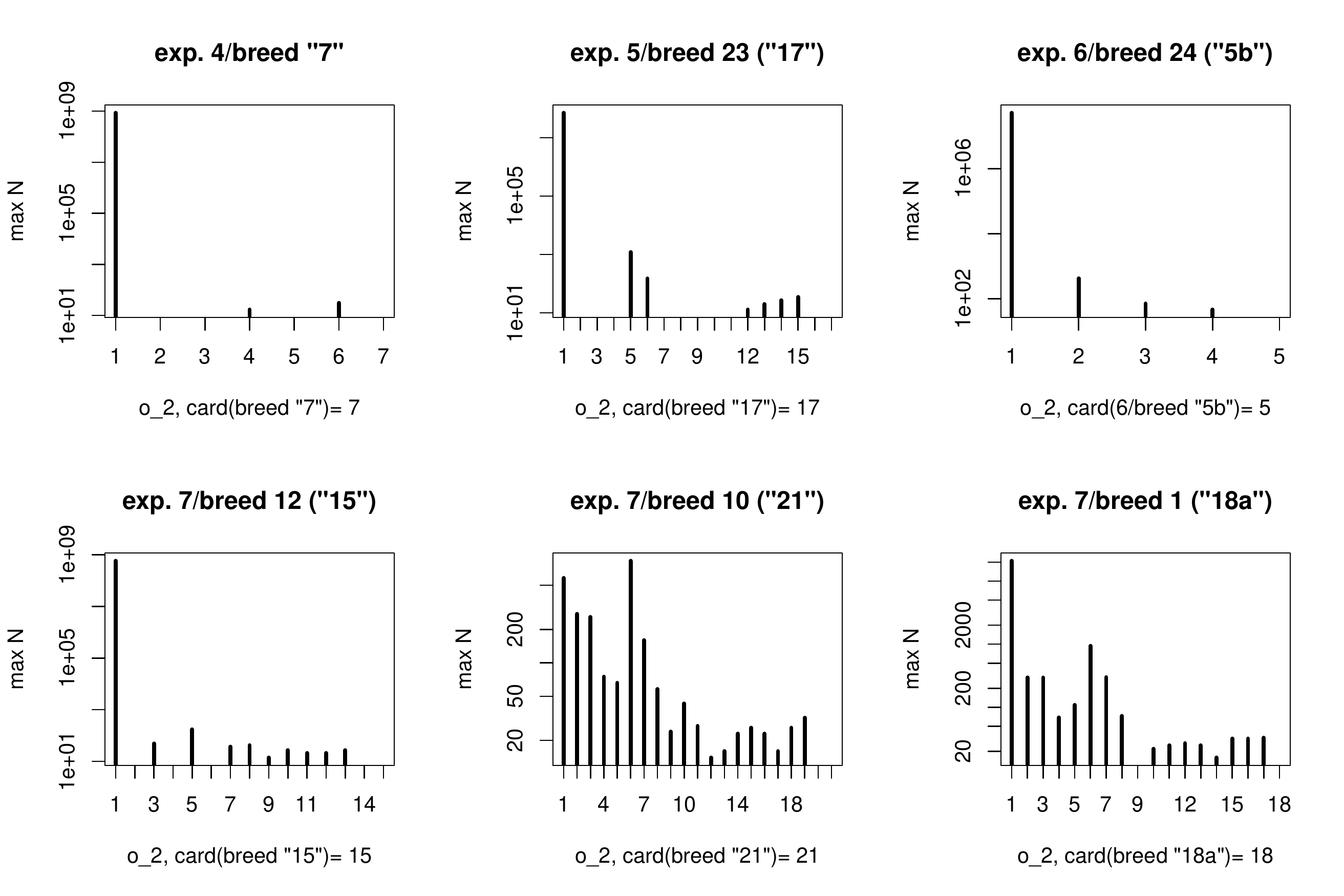}
\caption{The maximum computation length of some breeds were observed in our experiments by our search program based on random guessing. (Some ``trivial breed`` values, such as 70740809/breed ``17'', are not listed in Table \ref{firstexperiment} because they has not yet been found by the search program.)\label{rplots2}}
\end{figure}

Fig. \ref{rplots} shows further computational results of some probably convergent breeds. 
It is clear that $IQ(1) \ge 70740809$ because this estimation is based on the machine ( 9, 0, 11, 1,  15, 2, 17, 3, 1, 4, 23, 5, 24, 6, 3, 7, 21, 9, 0 ) as a trivial breed. (In practice, this machine is a variant of the Marxen and Buntrock's champion machine, see also \cite{bb_arxiv}). But our experiments have already found a probably convergent ``quasi-trivial'' breed, shown in Table \ref{firstexperiment}, that can produce $75001$ ones so it follows from this that $IQ(1) \ge 3.272948454*10^9$.  (A computation of a non-trivial breed is called quasi-trivial if its $o_2$ value is equal to $1$.)

\begin{figure}[!h]
\centering
\includegraphics[scale=.4]{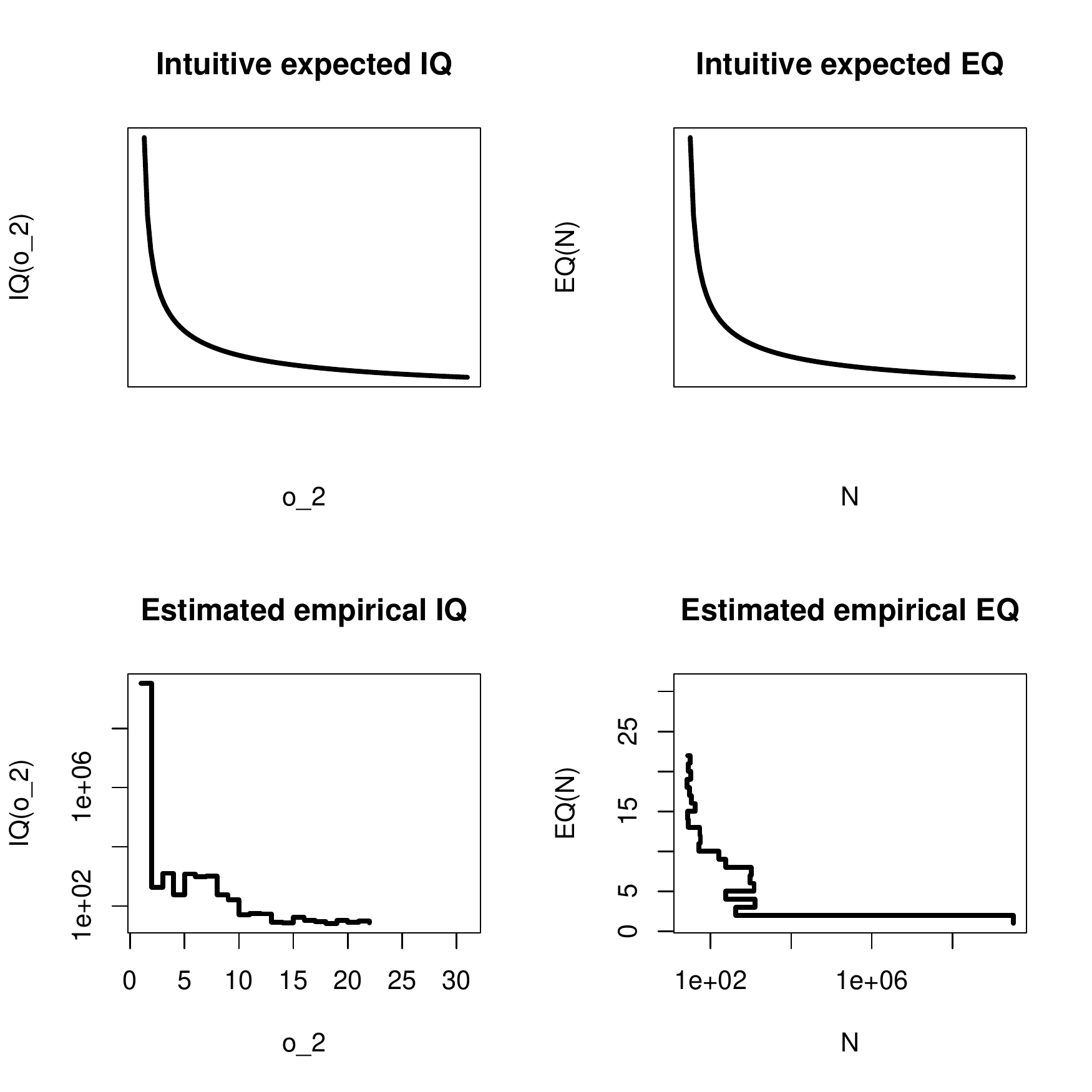}
\caption{The bottom two graphs plot the estimated empirical IQ functions for purebred breeds of Turing machines with 5 states. Some maximum values of $N$ are collected for $o_2=1, \dots, 22$ here. The top ones show the intuitive expected behavior of IQ functions of breeds.\label{rplots}}
\end{figure}

\begin{definition}[Intelligence quotients of Turing machines]\label{iq}
Let $V \in M$ be a Turing machine, the quantity 
$iq(V) = \max_{H \in \mathcal{B}(M)}\{\operatorname{OrchMach1}(H)\mid V \in H\}$
is called the \textit{intelligence quotient} and similarly
$eq(V) = \max_{H \in \mathcal{B}(M)}\{\overline{o_n(H)} \mid V \in H\}$
is called the \textit{emotional intelligence quotient} of the Turing machine $V$.
\end{definition}

\subsubsection{Orchestrated machines with input}

The previous algorithm used Turing machines that have a binary tape alphabet. To simplify constructing concrete Turing machines the $\{0, 1, \epsilon\}$ tape alphabet will be used in Alg. \ref{orchmach2} but the input alphabet will remain binary. The modification of Alg. \ref{orchmach1} is shown  in the pseudo code of Alg. \ref{orchmach2} where modified lines are contained only.

\begin{algorithm}[H]
\caption{Orchestrated machines (with the same input)\label{orchmach2}}
\begin{algorithmic}[1]
\Require $M_0 \subseteq M$, $w \in \{0, 1\}^*$\Comment{$w$ is the input word}
\Statex $\dots$
\Procedure{$\operatorname{OrchMach2}$}{$M_0$, $w$}
\Statex $\dots$
\makeatletter\setcounter{ALG@line}{2}\makeatother
\State $F=(0, a)$ \Comment{$a \in  \{0, 1\}$ is the first letter of the input word}
\Statex $\dots$
 \makeatletter\setcounter{ALG@line}{23}\makeatother
\EndProcedure
  \end{algorithmic}
\end{algorithm}

In the following let $w \in \{0, 1\}^*$ be a given arbitrary word. 
The $\operatorname{OrchMach1}(H)$ algorithm is a special case of the $\operatorname{OrchMach2}(H, w)$ where the input word $w$ is the empty word $\lambda$. With this in mind we can easily generalize the definitions of breeds as follows. 
A machine $\operatorname{OrchMach2}(H, w)$ \textbf{halts} if there is a computation of Alg. \ref{orchmach2} such that $\operatorname{OrchMach2}(H, w)$ $< \infty$.

\begin{definition}[w-Breeds]
The set $H \subseteq M$ is referred to as a Turing machine w-breed (or simply a w-breed) 
if  $\operatorname{OrchMach1}(H, w)$ halts, that is if there exist a finite sequence $o_n(H,w)$. A breed $H$ is called non-trivial if $\overline{o_n(H,w)} \ge 2$.
\end{definition}

\begin{definition}[Convergence and divergence]
A machine w-breed $H$ is divergent if for all $K \in \mathbb{N}$ there exist $O_k(H,w)$ such that 
$\operatorname{OrchMach1}(H,w) \ge K$. A machine breed is convergent if it is not divergent.
\end{definition}

\begin{definition}[The recognized language]
\begin{align*}
L(\operatorname{OrchMach2}(H)) = 
\{w \in \{0, 1\}^*  \mid  \operatorname{OrchMach2}(H, w)  \text{ halts}\}.
\end{align*}
\end{definition}

\begin{definition}[Purebred w-breeds]
A convergent machine w-breed $H$ is purebred
 if there is no real subset $M_1\subset H$ such that  
\[
L(\operatorname{OrchMach2}(H)) = L(\operatorname{OrchMach2}(M_1)).
\]
\end{definition}

\begin{example}[A purebred w-breed and a not purebred one]
Let 
$f_X=\{$
$(0,0) \to (1, 0, \rightarrow)$, 
$(0,1) \to (\infty, 1, \uparrow)$, 
$(1,1) \to (0, 1, \rightarrow)$, 
$(1,0) \to (\infty, 1, \uparrow)$, 
$(1,\epsilon) \to (\infty, 1, \uparrow)$, 
$(\infty,1) \to (\infty, 1, \uparrow)\}$ 
and 
$f_Y=\{$
$(0,1) \to (1, 1, \rightarrow)$, 
$(0,0) \to (\infty, 1, \uparrow)$, 
$(1,0) \to (0, 0, \rightarrow)$, 
$(1,1) \to (\infty, 1, \uparrow)$, 
$(1,\epsilon) \to (\infty, 1, \uparrow)$,
$(\infty,1) \to (\infty, 1, \uparrow)\}$
be transition rules of two Turing machines $X$ and $Y$ shown in Fig. \ref{figxy} then it is easy to see that for example the 0110-breed $\{X, Y\}$ is purebred. 
Let 
$f_{X'}=\{$
$(0,\epsilon) \to (r, \epsilon, \leftarrow)$, 
$(r,0) \to (r, 0, \leftarrow)$, 
$(r,1) \to (r, 1, \leftarrow)\}$ 
be transition rules of the machine $X'$. Then every $(..)^*\text{-breeds}$ $\{X, X', Y\}$ given by the regular expression $(..)^*$ are not purebred convergent $(..)^*\text{-breeds}$.
\end{example}

\begin{figure}[h!]
\centering
\subfigure[The language recognized by this machine $X$ is $(01)^n$.]{\qquad\includegraphics[scale=.4]{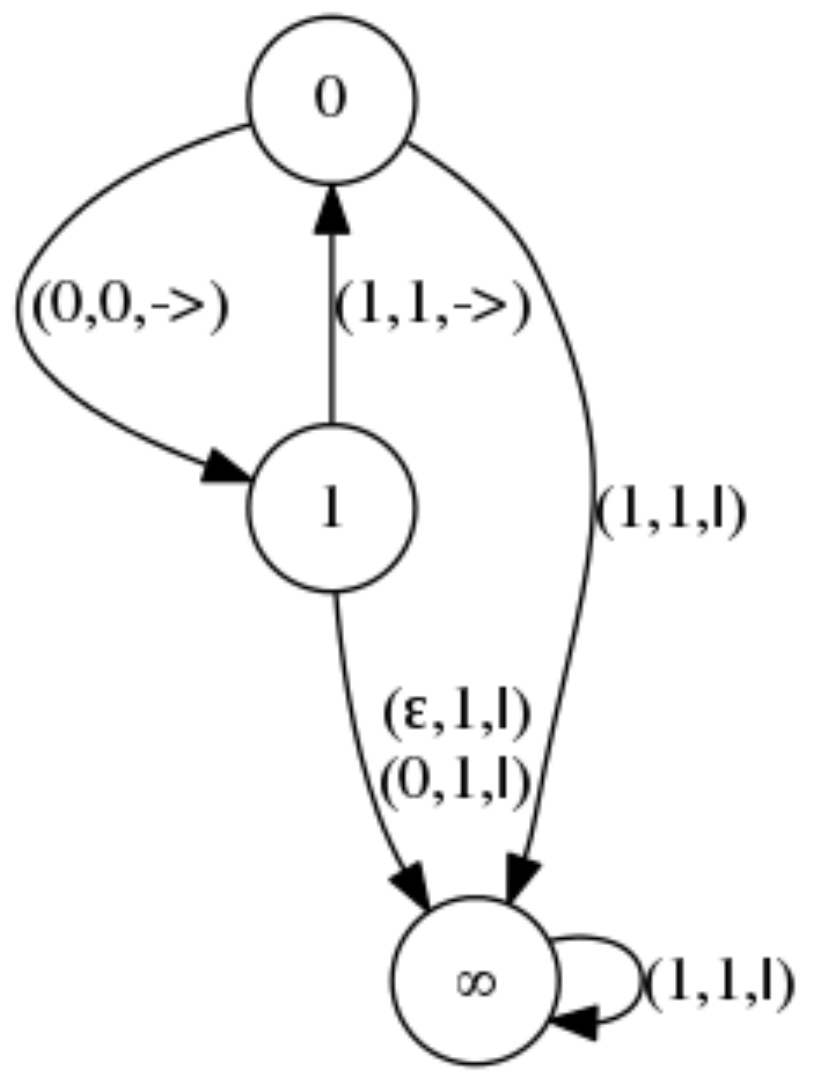}\label{figx}\qquad\qquad}
\qquad
\subfigure[The language recognized by $Y$ is $(10)^n$.]{\qquad\includegraphics[scale=.4]{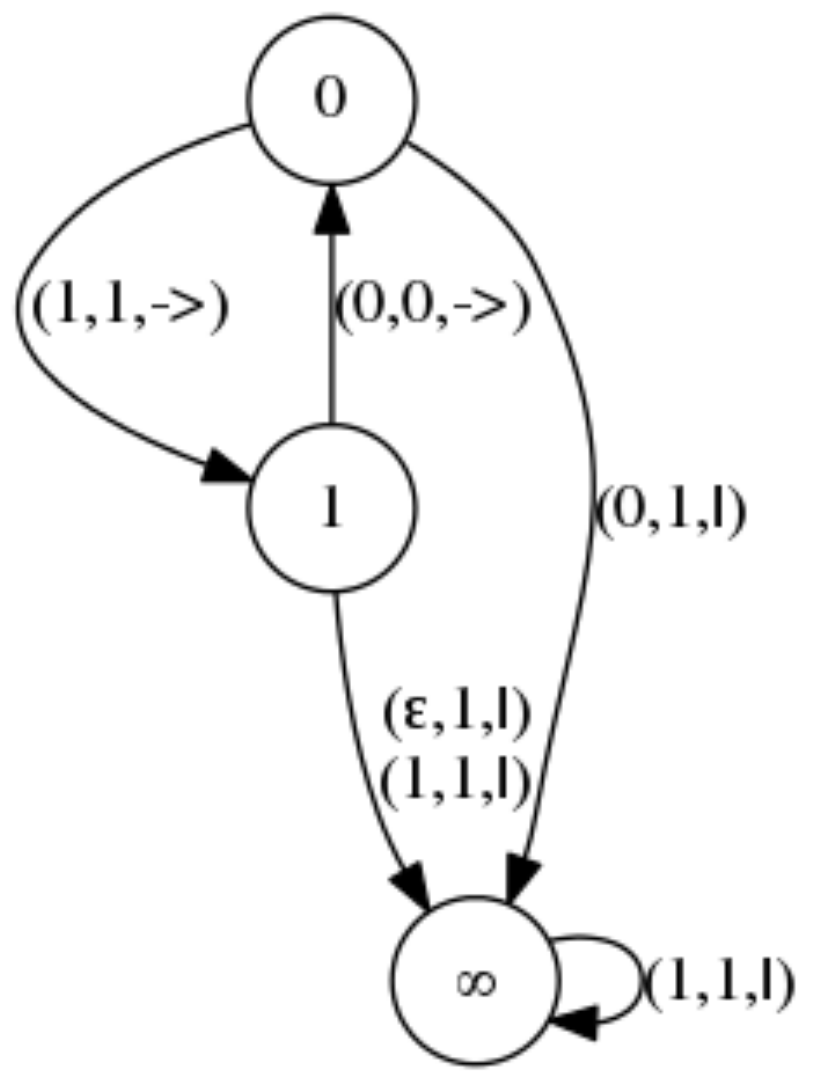}\label{figy}\qquad}
\caption{The Turing machines $X$ and $Y$.}
\label{figxy}
\end{figure}

\begin{example}[0110]
$0110 \notin L(X) \cup L(Y)$, but $0110 \in L(\operatorname{OrchMach2}(\{X, Y\}))$  
\end{example}

\begin{theorem}[Languages recognized by orchestrated machines]
$\bigcup{L(H_i)} \subseteq L(\operatorname{OrchMach2}(H))$
\end{theorem}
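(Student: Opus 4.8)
The plan is to prove the containment directly. I take an arbitrary $w \in \bigcup_i L(H_i)$, fix an index $i$ with $w \in L(H_i)$, and exhibit a single halting computation of $\operatorname{OrchMach2}(H, w)$; by the definition of the recognized language this already puts $w$ into $L(\operatorname{OrchMach2}(H))$. The key point I would lean on is that the orchestrated machine accepts $w$ as soon as \emph{some} non-deterministic sequence of choices in the $select$ step drives the main loop to termination, so I am free to prescribe the selections rather than having to cope with an adversarial one.

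First I would specify the selection policy: at every iteration in which $H_i$ still belongs to $M_n$, let $select(U)$ return the pair $(H_i, T)$ coming from $H_i$ (unique, since $H_i$ is deterministic). Under this policy line~14 applies $H_i$'s own rule to $H_i$, so $H_i$ performs exactly the transitions it would make when run in isolation on $w$; by Theorem~\ref{tapes} the common tape and head evolve precisely as in that stand-alone run, because the only right-hand side ever written is $H_i$'s. Since $w \in L(H_i)$, the stand-alone computation halts, so after finitely many steps $H_i$ reaches some configuration $(q^\ast, s^\ast)$ for which $f_{H_i}$ has no rule; at that iteration $H_i$ takes the \texttt{else} branch and is deleted in line~\ref{removeG}.

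Next I would account for the remaining machines. Because line~16 forces the \emph{selected} right-hand side onto every surviving $G$, each such $G$ is dragged through the same sequence of states as $H_i$: whenever the selected rule targets a state $q'$ with $q' \notin Q_G$, the machine $G$ has no rule for the next value of $F$ and is removed at the following pass, while every $G$ that owns all the required states stays in lockstep with $H_i$ and shares its configuration $(q^\ast, s^\ast)$ at the moment $H_i$ halts.

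The hard part is closing the computation once $H_i$ has disappeared. A co-runner $G$ that survives to step $T$ and happens to possess a rule for $(q^\ast, s^\ast)$ is \emph{not} removed together with $H_i$, so $M_{T+1}$ may still be non-empty and the loop need not terminate; in the extreme, a machine that has an applicable rule at every configuration ever visited (an immortal infinite loop such as the machine $B$ of Fig.~\ref{figb}) keeps $M_n \neq \emptyset$ forever. I therefore expect the clean statement to need either the hypothesis that $H$ is a convergent breed, or an explicit tail argument showing that from the shared configuration $(q^\ast, s^\ast)$ the residual sub-breed itself admits a halting continuation — for instance by iterating the same ``follow a halting member'' strategy on the survivors, or by appealing to Theorem~\ref{relndtm} when the surviving machines have a common domain $F_i$. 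Making this tail argument rigorous, rather than the straightforward simulation of $H_i$, is where I anticipate the genuine difficulty.
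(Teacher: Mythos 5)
Your selection policy (always hand $select$ the rule of the accepting machine $H_j$) is exactly the paper's proof, which consists of the single sentence ``simply select the transition rule of the machine $H_j$ in every step'' and stops where your second paragraph ends. The difficulty you isolate in your last paragraph is therefore not a defect of your write-up relative to the paper: it is a genuine gap that the paper's proof shares and does not acknowledge. The main loop terminates only when $M_n=\emptyset$, so driving $H_j$ to a configuration with no applicable rule does not by itself yield a halting computation of $\operatorname{OrchMach2}(H,w)$; the surviving co-runners must also die. And they need not: take $A$ with $w\in L(A)$ and a companion $B$ with $Q_B\supseteq Q_A$ whose transition function is total on $Q_B\times\{0,1,\epsilon\}$ and whose right-hand sides target only states of $Q_B$. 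Then on \emph{every} computation path and at every step the global pair $F$ lies in $Q_B\times\{0,1,\epsilon\}$, so $B$ always has an applicable rule, is never removed in line~\ref{removeG}, and $M_n\neq\emptyset$ forever. Hence no computation of $\operatorname{OrchMach2}(\{A,B\},w)$ halts, $w\notin L(\operatorname{OrchMach2}(\{A,B\}))$, and the stated inclusion fails. So the theorem needs a side condition of roughly the kind you propose --- e.g.\ that from the configuration at which $H_j$ halts the residual sub-breed admits a halting continuation, or a change of convention so that the orchestrated machine accepts as soon as one member halts. Note that assuming the breed is convergent would make the claim true only vacuously, since for a convergent $w$-breed every computation halts whether or not $w\in\bigcup L(H_i)$.

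Two smaller remarks. First, your use of Theorem~\ref{tapes} to argue that under the ``always select $H_j$'' policy the common tape and head evolve exactly as in $H_j$'s stand-alone run is correct and is the (unstated) reason the first half of the argument works at all; the paper leaves this implicit. Second, your suggestion to invoke Theorem~\ref{relndtm} for the survivors only applies when they share a common rule domain, which is not guaranteed here, so the tail argument genuinely requires an extra hypothesis rather than a cleverer proof.
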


\begin{proof}
The proof is trivial. Suppose that $w \in L(H_j)$ and then simply select the transition rule of the machine $H_j$ in every step.
\end{proof}

In the following, let $\mathcal{B}(M,w)$ denote the set of all purebred Turing machine w-breeds.
But the former IQ definitions must be weakened or altered because the formal generalization of the formula in Def. \ref{iq}

$\operatorname{w-iq}(V) = \max_{H \in \mathcal{B}(M,w)}$$\{\operatorname{OrchMach2}(H, w)\mid V \in H\}$
would naturally lead to
$\operatorname{iq}(V) = \lim_{\vert z \vert \to \infty}{\frac{\operatorname{z-iq}(V)}{f(\vert z \vert)}}, 
H \in \mathcal{B}(M,z), V \in H$
but this would be infinite for all Turing machines because for every breed $H$ there exist a breed $H'$ such that $L(H)=L(H')$ and $\operatorname{OrchMach2}(H, w) \ge \vert z \vert f(\vert z \vert)$.

\begin{definition}[w-iq]
Let $H \in \mathcal{B}(M,w)$ be a Turing machine purebred breed, the quantity 
$\operatorname{w-iq}(H) = \max_{}\{\operatorname{OrchMach2}(H, w)\}$, $\operatorname{w-iq}: \mathcal{B}(M,w) \to \mathbb{N}$ is called the \textit{w-intelligence quotient} and similarly
the quantity $\operatorname{w-eq}(H) = \max_{}\{\left \lfloor \overline{o_n(H)} \right \rfloor\}$,  $\operatorname{w-eq}: \mathcal{B}(M,w) \to \mathbb{N}$
is called the \textit{w-emotional quotient} 
of the breed $H$.
\end{definition}

\begin{example}[]\label{iqx}
$\operatorname{w-iq}(\{X\}) \ge \vert w \vert$, 
$\operatorname{w-iq}(\{X,Y\}) \ge \vert w \vert$, 
$\operatorname{w-iq}(\{X'\}) \ge 2\vert w \vert$
 \end{example}

\begin{definition}[w-intelligence functions]
Let $N, Z \in \mathbb{N}$ be na\-tu\-ral numbers.
The functions 
\begin{align*}
&\operatorname{w-EQ}: \mathbb{N} \to \mathbb{N},\  \operatorname{w-EQ}(N) = 
\max_{H \in \mathcal{B}(M,w)}\{\left \lfloor\overline{o_n(H,w)}\right \rfloor \mid \operatorname{w-iq}(H) = N\}
\\
&\operatorname{w-IQ}: \mathbb{N} \to \mathbb{N},\  \operatorname{w-IQ}(Z) = 
\max_{H \in \mathcal{B}(M,w)}\{\operatorname{OrchMach2}(H,w)\mid  \operatorname{w-eq}(H) = Z\}
\end{align*}
are called \emph{w-intelligence functions} of breeds.
\end{definition}

\subsection{Universal orchestrated machines}

Alg. \ref{orchmach3} gives the pseudo code for \textit{universal orchestrated machines}. It allows higher autonomy to individual Turing machines in their operation. If we compare this orchestrated algorithm with algorithms given in previous sections we will easily see the difference, the variable $F$ that represents the actual state and the read symbol is a local variable in sense that each machines of a breed have their own variable $F$.  In this paper we do not investigate the properties of $\operatorname{OrchMach3}$ only the algorithm is presented in Alg. \ref{orchmach3}, but it is clear that Theorem \ref{tapes} does not hold for universal orchestrated machines.

\begin{algorithm}[H]
\small
\caption{Orchestrated machines (with different input)\label{orchmach3}}
\begin{algorithmic}[1]
\Require $M_0 \subseteq M$, $w_i \in \{0, 1\}^*, i=1,\dots,card(M_0)$\Comment{$w_i$ is the input word of the machine $R_i \in M_0$}
	\Statex $n \in \mathbb{N}$, $N\in \mathbb{N} \cup \{\infty\}$, $M_n\subseteq M_{n-1}\subseteq M$, $G_n \in M$, $T_n \in Q_{G_n}\times\{0,1\}\times\{\leftarrow,\uparrow,\rightarrow\}$, $F_G \in Q_G \times \{0, 1\}$,  $G \in M_n$ \Comment{Local variables}.	
\Ensure $N$, 
	\Statex $(card(M_2),\dots, card(M_{N-1}))$, $((G_2, T_2), \dots, (G_{N-1}, T_{N-1}))$.
\Procedure{$\operatorname{OrchMach3}$}{$M_0$, $\{w_i\}$}
\State $n = 0$
\State $F_G=(0, a)$, $G \in M_n$\Comment{$a \in  \{0, 1\}$ is the first letter of the input of $G$}
\While{$M_n \ne \emptyset$}	
\State $U=\emptyset$, $M_{n+1} = M_n$
\For{$G \in M_n$}
  \If{$F_G \to T \in f_{G}$}
    \State $U = U \cup \{(G, T)\}$
  \Else
    \State $M_{n+1} = M_{n+1}  \setminus {G}$ 
  \EndIf
\EndFor
    \State $(G_n, T_n) = select(U)$
    \For{$G \in M_{n+1}$}
    	\State $F_G = exec(G, T_n)$ 
    \EndFor
    \State $n = n + 1$
\EndWhile  
    \State $N = n$, $o_2 = card(M_2), \dots, o_{N-1} = card(M_{N-1})$ , $O_2 = (G_2, T_2), \dots, O_{N-1} = (G_{N-1}, T_{N-1})$ 
    \State \Return{$N$}
\EndProcedure
  \end{algorithmic}
\end{algorithm}

\section{Conclusion and future directions}

We have introduced a new special type of universal Turing machine called orchestrated machine that allows to begin the investigation of an a'priori ability of certain Turing machines to work with each other. 
Among purebred machine breeds we have defined two non-computable total functions EQ and IQ to catalyze the search for more  interesting machine breeds.

In this paper, the time complexity classes of orchestrated machines were not being investigated  but it is clear that the $\text{NP} \subseteq \text{OM1P}$, where $\text{OM1P}$ denotes the class of decision problems solvable by an orchestrated machine (with algorithm $\operatorname{OrchMach1}$) in polynomial time. This inclusion follows from the Theorem \ref{decomp}.

We have many other exciting and unanswered questions for future research. For example, in this paper, the orchestrated machines have been built from Turing machines.  Is it possible for an orchestrated machine to be constructed from orchestrated machines?

We believe machine breeds would become a good model for processes that can be well distinguished from each other. As an intuitive example, in a living cell, several well-known processes (such as reverse transcription or  citric acid cycle) are taking place in the same time, these processes together can be seen as a ``breed''. 
To illustrate our intuition, we can provide a further subjective example of a``breed''. When the author listens to Ferenc Liszt's Hungarian Rhapsody, for piano No. 15 (R\'ak\'oczi marsch) then the author's ``sensor program'' (that listens to the music) and Liszt's ``generator program'' (that wrote the music) may form a ``breed''.
As a first step towards in this direction, it has already been an interesting and unanswered question whether the orchestrated architecture introduced in this paper can be developed for higher-level programming models.

The main question is that how can we develop computing architectures that will be able to replace sequential nature of the Neumann architecture in some a'priori AI application domain, for example, in reproducing the human thinking. We have given a new model for Turing machines. But it is also true that at this point we can imagine only evolutionary programming methods for general programming of orchestrated machines. 

\section*{Acknowledgment}

The computations shown in this paper were partially  performed on the NIIF High Performance Computing supercomputer at University of Debrecen.

\bibliography{orchmach123}

\begin{thebibliography}{16}
\providecommand{\natexlab}[1]{#1}
\providecommand{\url}[1]{\texttt{#1}}
\expandafter\ifx\csname urlstyle\endcsname\relax
  \providecommand{\doi}[1]{doi: #1}\else
  \providecommand{\doi}{doi: \begingroup \urlstyle{rm}\Url}\fi

\bibitem[B{\'a}tfai(2009{\natexlab{a}})]{bb_arxiv}
N.~B{\'a}tfai.
\newblock {Recombinations of Busy Beaver Machines}.
\newblock \emph{{ArXiv e-prints, CoRR}}, abs/0908.4013, 2009{\natexlab{a}}.
\newblock URL \url{http://arxiv.org/abs/0908.4013}.

\bibitem[B{\'a}tfai(2009{\natexlab{b}})]{running_time_arxiv}
N.~B{\'a}tfai.
\newblock {On the Running Time of the Shortest Programs}.
\newblock \emph{{ArXiv e-prints, CoRR}}, abs/0908.1159, 2009{\natexlab{b}}.
\newblock URL \url{http://arxiv.org/abs/0908.1159}.

\bibitem[Chaitin(2005)]{Chaitin}
G.~Chaitin.
\newblock \emph{Meta Math! : The Quest for Omega}.
\newblock Pantheon, 2005.
\newblock ISBN 0-37-5423133.
\newblock URL \url{http://arxiv.org/pdf/math/0404335v7.pdf}.

\bibitem[Fellbaum(1998)]{wordnet}
C.~Fellbaum, editor.
\newblock \emph{{WordNet: an electronic lexical database}}.
\newblock MIT Press, 1998.

\bibitem[Hameroff(1998)]{orchor}
S.~R. Hameroff.
\newblock Quantum computation in brain microtubules? {T}he {P}enrose-{H}ameroff
  '{O}rch {OR}' model of consciousness.
\newblock \emph{Philosophical {T}ransactions {R}oyal {S}ociety {L}ondon :
  {M}athematical, {P}hysical and {E}ngineering {S}ciences}, 356\penalty0
  (1743):\penalty0 1869--1896, 1998.

\bibitem[Lenat(1995)]{cyc}
D.~B. Lenat.
\newblock Cyc: A large-scale investment in knowledge infrastructure.
\newblock \emph{Commun. ACM}, 38\penalty0 (11):\penalty0 33--38, 1995.

\bibitem[Liu and Singh(2004)]{conceptnet}
H.~Liu and P.~Singh.
\newblock Conceptnet - a practical commonsense reasoning tool-kit.
\newblock \emph{BT Technology Journal}, 22\penalty0 (4):\penalty0 211--226,
  2004.

\bibitem[Marxen et~al.(1990)Marxen, Gmbh, Buntrock, and Berlin]{MarxenBuntrock}
H.~Marxen, S.~Gmbh, J.~Buntrock, and T.~U. Berlin.
\newblock Attacking the busy beaver 5.
\newblock \emph{Bull EATCS}, 40:\penalty0 247--251, 1990.

\bibitem[{Mavromatos} et~al.(2002){Mavromatos}, {Mershin}, and
  {Nanopoulos}]{bqt}
N.~E. {Mavromatos}, A.~{Mershin}, and D.~V. {Nanopoulos}.
\newblock {QED-Cavity Model of Microtubules Implies Dissipationless Energy
  Transfer and Biological Quantum Teleportation}.
\newblock \emph{International Journal of Modern Physics B}, 16:\penalty0
  3623--3642, 2002.

\bibitem[{Michel}(2009)]{MichelSurvey}
P.~{Michel}.
\newblock {The Busy Beaver Competition: a historical survey}.
\newblock \emph{ArXiv e-prints}, 2009.
\newblock URL \url{http://arxiv.org/abs/0906.3749}.

\bibitem[Neumann(1958)]{Neumann}
J.~v. Neumann.
\newblock \emph{{The Computer and the Brain}}.
\newblock Yale University Press, 1958.
\newblock ISBN 0-30-0007930.

\bibitem[Rado(1962)]{Rado}
T.~Rado.
\newblock {On non-computable functions}.
\newblock \emph{The Bell System Technical Journal}, 41\penalty0 (3):\penalty0
  877--884, 1962.

\bibitem[Searle(1980)]{strongAI}
J.~R. Searle.
\newblock Minds, brains and programs.
\newblock \emph{Behavioral and Brain Sciences}, 3, 1980.

\bibitem[Tanenbaum and Wetherall(2011)]{tanenbaum}
A.~Tanenbaum and D.~J. Wetherall.
\newblock \emph{Computer Networks}.
\newblock Prentice Hall, 5th edition, 2011.
\newblock ISBN 0-13-212695.

\bibitem[Von~Neumann(1955)]{neumannq}
J.~Von~Neumann.
\newblock \emph{Mathematical foundations of quantum mechanics}.
\newblock Princeton Univ. Press, 1955.

\bibitem[Wigner(1967)]{wigner_essay}
E.~P. Wigner.
\newblock Remarks on the mind-body question.
\newblock \emph{Symmetries and Reflections}, page 179, 1967.

\end{thebibliography}

\end{document}